\newcommand{\comment}[1]{}
\newtheorem{theorem}{Theorem}[section]
     \newtheorem{lemma}[theorem]{Lemma}
     \newcommand{\qed}{\nobreak \ifvmode \relax \else
           \ifdim\lastskip<1.5em \hskip-\lastskip
           \hskip1.5em plus0em minus0.5em \fi \nobreak
           \vrule height0.75em width0.5em depth0.25em\fi}
\begin{document}

\title{ Large-Scale Clustering Based on Data Compression}

\author{\authorblockN{Xudong Ma \\}
\authorblockA{Pattern Technology Lab LLC, Delaware, U.S.A.\\
Email: xma@ieee.org} }

\maketitle

\begin{abstract}

This paper considers the clustering problem for large data sets. We
propose an approach based on distributed optimization. The
clustering problem is formulated as an optimization problem of
maximizing the classification gain. We show that the optimization
problem can be reformulated and decomposed into small-scale sub
optimization problems by using the Dantzig-Wolfe decomposition
method. Generally speaking, the Dantzig-Wolfe method can only be
used for convex optimization problems, where the duality gaps are
zero. Even though, the considered optimization problem in this paper
is non-convex, we prove that the duality gap goes to zero, as the
problem size goes to infinity. Therefore, the Dantzig-Wolfe  method
can be applied here. In the proposed approach, the clustering
problem is iteratively solved by a group of computers coordinated by
one center processor, where each computer solves one independent
small-scale sub optimization problem during each iteration, and only
a small amount of data communication is needed between the computers
and center processor. Numerical results show that the proposed
approach is effective and efficient.
\end{abstract}

\section{Introduction}

\label{section_introduction}

In the recent years, due to the rapid progress of data acquisition
and communication technologies, it has become readily easy to
collect and store large amounts of data. Large databases of
scientific measurements at the scale of terabyte or even petabyte
can be frequently observed in high energy physics, astronomy, space
exploration and human genome projects. Large databases of financial
data and sale transactions at the scale of terabyte or petabyte can
also be frequently observed. These huge amounts of data usually
contain valuable scientific and business information. For example, a
large collection of sale transaction data may contain important
information of consumer behaviors and market trends. However, the
data analysis on such large databases presents many technique
challenges. The database size is usually far larger than the memory
size of any single computer. Many existing centralized data analysis
algorithms fail for these instances. In fact, most data analysis
problems for large databases are currently open or not well-solved.

In this paper, we consider one important data analysis problem, the
clustering problem for large databases. The clustering problem is
the problem that a set of given data samples are classified into
different groups, so that, the data samples within each group are
similar according to certain metrics. Clustering is a fundamental
problem in data analysis. It has many applications in pattern
recognition, machine learning, data mining, computer vision, and
signal processing. For example, clustering is usually an important
step in many data mining algorithms.

Many algorithms for clustering problems have been previously
discussed in the literature, see for example \cite{jain99} and
references therein. These algorithms range from heuristic algorithms
to statistical modeling based algorithms. Among the previous
algorithms, the statistical modeling based methods generally have
better clustering performance compared with other types of
algorithms, especially when the data clusters are not well
separated. The Expectation-Maximization (EM) algorithms with mixture
Gaussian modeling \cite{dempster77} \cite{cheeseman96} are the major
state-of-the-art statistical modeling based clustering algorithms.
The EM algorithms can be considered as iterative algorithms for
computing the maximum likelihood estimation. It has been proven that
the likelihood functions do not decrease during iterations.

However, it is well-known that the EM algorithms have certain
limitations. First, according to previous experimental results, the
EM algorithms may convergence very slowly \cite{archambeau03},
\cite{yang98}. It is shown in \cite{xu96}, that the EM algorithms
are first-order optimization algorithms, which provides a
theoretical explanation for the slow convergence speeds. In fact, it
has been a long-standing open problem that super-linear and
second-order methods should be found and preferred for the
clustering problems \cite{redner84}.  Second, the EM algorithms do
not converge and have numerical difficulties for certain types of
instances \cite{archambeau03}, \cite{fraley98}. For example, the EM
algorithms do not converge, when the covariance matrices are
singular. The EM algorithms also do not converge, when the numbers
of components in the mixture modeling are greater than the actual
numbers of data clusters.

In addition, the standard EM algorithms require  memory spaces
proportional to the database size, therefore, do not scale well.
Various scaling-up versions of the standard EM algorithms have been
proposed in the literature \cite{bradley98b}, \cite{zhang96}.
However, these previous approaches are approximation algorithms. The
accuracy of the obtained results decreases as the ratio between the
database size and main processor memory space size increases.


In this paper, we propose a new clustering algorithm for large
databases based on data compression principles and mixture Gaussian
modeling. Following the approaches in \cite{ma09}, we formulate the
clustering problems as optimization problems. Instead of using a
centralized approach, we propose a distributed algorithm to solve
the global optimization problems. In our approach, the global
optimization problem is decomposed into small-scale sub optimization
problems using the Dantzig-Wolfe decomposition method
\cite{dantzig60}. Generally speaking, the Dantzig-wolfe method can
only be used in the convex optimization case, where the duality gaps
are zero. Even though, the considered problem in this paper is
non-convex, we show that the duality gap goes to zeros as the
problem size goes to infinity. Therefore, the Dantzig-Wolfe method
can be applied here. Our algorithm is especially suitable for the
cases of distributed databases, where data are stored at multiple
hosts or even at different geographical locations. The global
optimal solutions can be computed with only intra-host computations,
intra-host local database queries and a small amount of inter-host
communications. Unlike many clustering algorithms for large
databases, which compute approximate solutions, our algorithm
computes exact solutions. Numerical results show that the proposed
algorithm does not have any numerical difficulties for the case that
the covariance matrices are singular. Numerical results also show
that the algorithm has fast convergence speeds.

The rest of this paper is organized as follows. We present the
proposed algorithm in Section
\ref{section_classification_algorithm}.  We prove that the duality
gap is vanishing for sufficiently large databases in Section
\ref{sec_correct}. Numerical results are presented in Section
\ref{sec_numerical}. We present the conclusion remark in Section
\ref{sec_conclusion}.

\emph{Notation:} We use bold face lower-case letters and bold face
capital letters to denote the column vectors and matrices
respectively. For example, we use $\pmb{a}$ to denote a column
vector $\pmb{a}$. We use $\pmb{a}(d)$ to denote the $d$-th element
of the vector $\pmb{a}$. We use  $\pmb{A}^t$ to denote the transpose
of the matrix $\pmb{A}$. We use $H(p_1,\ldots,p_J)$ to denote the
entropy function,
\begin{align}
H(p_1,\ldots,p_J) = \sum_{i=1}^{J} -p_i\log\left(p_i\right).
\end{align}
We use $\log(x)$ to denote the natural logarithm of the number~$x$.
We use $det(\pmb{A})$ to denote the determinant of the matrix
$\pmb{A}$.

\section{Clustering Algorithm}
\label{section_classification_algorithm}

In this paper, we consider a data set consisting of  $N$ data
samples, $ \pmb{x}_1,\pmb{x}_2,\ldots,\pmb{x}_N$, where each data
sample is a $D$ dimensional vector. We assume that the data samples
are randomly distributed with a mixture Gaussian distribution. That
is,
\begin{align}
& p(\pmb{x}_n) = \notag \\
& \sum_{i=1}^{J} p_i
\frac{1}{(2\pi)^{D/2}det(\pmb{\Sigma}_i)^{1/2}}\exp\left\{-\frac{1}{2}(\pmb{x}_n-\pmb{\mu}_i)^t\pmb{\Sigma}_i^{-1}(\pmb{x}_n-\pmb{\mu}_i)\right\}
\end{align}
Alternatively, we may consider $\pmb{x}_1,\ldots, \pmb{x}_n,
\ldots,\pmb{x}_N$ as a mixture of data samples from $J$ information
sources, where each information source is Gaussian distributed. The
considered  problem is therefore estimating the membership of each
data sample to one of the $J$ information sources, and also the
probability distribution of each information source.

In this paper, we propose a distributed algorithm for the above
clustering problem. Our algorithm is efficient for the case that the
data set contains a large amount of data samples. The data samples
can be stored at multiple computers or database hosts. The proposed
algorithm formulates the clustering problem as an optimization
problem and decomposes the optimization problem into multiple
small-scale sub optimization problems. Each sub optimization problem
can be solved at one database host using only locally stored data
samples. A center processor coordinates the computation at the
database hosts. The final solution is obtained from the sub
optimization results. A diagram of the system is shown in
Fig.~\ref{dist_class_block}.

\begin{figure}[h]
 \centering
 \includegraphics[width=3in]{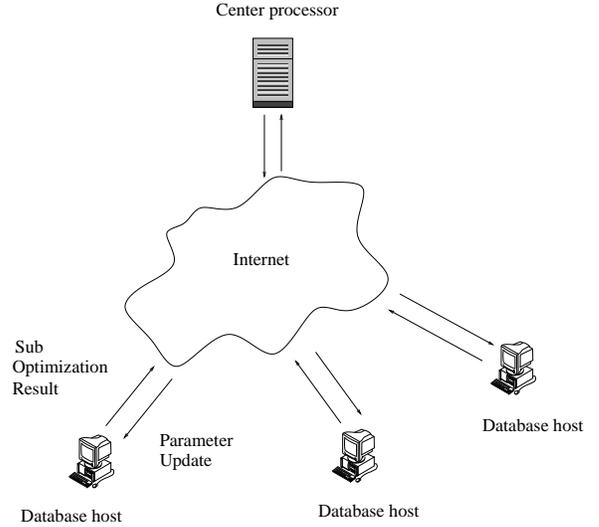}
 \caption{The diagram of the system.}
 \label{dist_class_block}
\end{figure}

The algorithm in this paper is built up on the data compression
based algorithm for clustering in \cite{ma09}. The main idea behind
the algorithm is that optimal data clustering should induce optimal
adaptive data compression. That is, if we partition the data set
into several clusters and use one data compression encoder for each
cluster, then the optimal compression performance is achieved if
each cluster contains only the data samples from  one information
source. The algorithm in \cite{ma09} then formulates the data
cluster problem as an optimization problem, where the classification
gain is maximized. The classification gain is a measure of data
compression efficiency previously proposed in the data compression
literature \cite{joshi97}.

If the covariance matrices of all clusters are not singular, then
the classification gain is inversely proportional to the following
function,
\begin{align}
2H(p_1,\ldots,p_J) + \sum_{i=1}^{J} p_i
\log\left(det\left(\pmb{\Sigma}_i\right)\right)
\end{align}
where, $p_i$ is the fraction of data samples in the $i$-th cluster,
and $\pmb{\Sigma}_i$ is the covariance matrix of the $i$-th cluster.
The above function is the objective function in our optimization
formulation. In the sequel, we will always assume that the
covariance matrices of all clusters are not singular without the
loss of generality. Because, if any covariance matrix is singular,
we can minimize the following function in the algorithm instead,
\begin{align}
2H(p_1,\ldots,p_J) + \sum_{i=1}^{J} p_i
\log\left(det\left(\pmb{\Sigma}_i+\sigma_n^2 \pmb{I}_D\right)\right)
\end{align}
where, $\sigma_n^2$ is a sufficiently small positive number, and
$\pmb{I}_D$ is the $D$ dimensional identity matrix. This is
equivalent to adding white noise with covariance matrix
$\sigma_n^2\pmb{I}_D$ to the data samples and clustering the noise
corrupted data samples instead. The optimality of the final obtained
clustering results is not much affected, if $\sigma_n^2$ is small
enough.

The proposed algorithm formulates the clustering problem as an
optimization problem. We introduce a variable $a_{ni}$ for each
$n,i$, $1\leq n \leq N$, and $1\leq i \leq J$. The variable $a_{ni}$
is a likelihood that the $n$-th data sample belongs to the $i$-th
information source. The mean  $\pmb{\mu}_i$, covariance matrix
$\pmb{\Sigma}_i$, and occurrence probability $p_i$ are functions of
the likelihood variables $a_{ni}$,
\begin{align}
&  \pmb{\mu}_i=\frac{\sum_{n=1}^{N}a_{ni}\pmb{x}_n}{\sum_{n=1}^{N}a_{ni}} \\
& \pmb{\Sigma}_i=
\left(\frac{1}{\sum_{n=1}^{N}a_{ni}}\right)\sum_{n=1}^{N}a_{ni}
(\pmb{x}_n-\pmb{\mu}_i)(\pmb{x}_n-\pmb{\mu}_i)^t
\\
& p_i=\frac{\sum_{n=1}^{N}a_{ni}}{N}.
\end{align}
The formulated optimization problem is therefore,
\begin{align}
\label{original_optimization} &  \min_{\pmb{a}}
\left\{2H(p_1,\ldots,p_J) + \sum_{i=1}^{J} p_i
\log\left(det\left(\pmb{\Sigma}_i\right)\right)\right\} \notag
\\
& \mbox{Subject to: } \pmb{a}\in \Omega
\end{align}
where, $\pmb{a}$ is a vector obtained by stacking all the variables
$a_{ni}$,
\begin{align}
\Omega = \left\{ \pmb{a}\left| \sum_{i=1}^{J} a_{ni}=1,  0\leq
a_{ni}\leq 1\right.\right\}.
\end{align}
The final estimation results can be obtained by randomly rounding
the optimal solution $a_{ni}^\ast$ of the above optimization problem
as in \cite{ma09}. The near-optimality of this optimization based
approach has been shown in \cite{ma09} and \cite{ma10}.

In the sequel, we show that the optimization problem in Eqn.
\ref{original_optimization} can be reduced into sub optimization
problems that can be locally solved at each database host. The
reduction and reformulation procedure consists of four steps.

In the first step of reformulating the problem, we adopt an approach
of first solving the restricted optimization problems with $p_i$
being fixed,
\begin{align}
\label{optimization_with_fix_p}
& g(\widetilde{p}_1,\ldots,\widetilde{p}_J)^\ast \notag \\
& =
\min_{\pmb{a}} \left\{2H(\widetilde{p}_1,\ldots,\widetilde{p}_J) +
\sum_{i=1}^{J} \widetilde{p}_i
\log\left(det\left(\pmb{\Sigma}_i\right)\right)\right\} \notag
\\
& \mbox{Subject to: } \pmb{a}\in \Omega, \mbox{ and
}\sum_{n=1}^{N}a_{ni} = \widetilde{p}_i N,\mbox{ for all }i,
\end{align}
And then, we optimize over $\widetilde{p}_1,\ldots,\widetilde{p}_J$
to find the overall optimization solution,
\begin{align}
\label{optimization_CG} &
\min_{\widetilde{p}_1,\ldots,\widetilde{p}_J}
g(\widetilde{p}_1,\ldots,\widetilde{p}_J)^\ast,  \notag
\\
&\mbox{Subject to: } \sum_{i=1}^{J} \widetilde{p}_i=1, 0\leq
\widetilde{p}_i\leq 1.
\end{align}
The problem in Eqn. \ref{optimization_CG} can be easily solved by
using the gradient descent approach. The main problem is therefore
reduced to the optimization problem in Eqn.
\ref{optimization_with_fix_p}.

In the second step of reformulating the problem, we introduce
auxiliary unitary matrices $\pmb{A}_1,\ldots,\pmb{A}_J$.  We define
$\pmb{B}_i = \pmb{A}_i\pmb{\Sigma}_i\pmb{A}_i^t$, for
$i=1,\ldots,J$. It can be shown that the optimization problem in
Eqn. \ref{optimization_with_fix_p} is equivalent to the following
optimization problem.
\begin{align}
 \label{nd_prob_one} & \min_{\pmb{A}_1,\ldots,\pmb{A}_J,\pmb{a}}
\sum_{i=1}^{J} \widetilde{p}_i \sum_{d=1}^{D}\log\left(
\sigma_{id}^2 \right), \notag \\
&\mbox{Subject to: }\pmb{a}\in \Omega,
\,\,\,\pmb{A}_1,\ldots,\pmb{A}_J
\mbox{ are unitary} \notag \\
& \sum_{n=1}^{N} a_{ni} = \widetilde{p}_i N
\end{align}
where, $\sigma_{id}^2$ is the $d$-th diagonal element of the matrix
$\pmb{B}_i$. The two optimization problems are equivalent, because
\begin{align}
& \sum_{i=1}^{J} \widetilde{p}_i
\log\left(det\left(\pmb{\Sigma}_i\right)\right)
 \leq  \sum_{i=1}^{J} \widetilde{p}_i \sum_{d=1}^{D}\log\left(
\sigma_{id}^2 \right)
\end{align}
due to the Hadamard inequality \cite[page 502, Thm. 16.8.2]{cover},
and clearly equality can be achieved by certain
$\pmb{A}_1,\ldots,\pmb{A}_J$.

We solve the optimization problem in Eqn. \ref{nd_prob_one} by an
alternating optimization approach. That is, we iteratively first fix
$\pmb{A}_1,\ldots,\pmb{A}_J$ and optimize over $\pmb{a}$, and then
fix $\pmb{a}$ and optimize over $\pmb{A}_1,\ldots,\pmb{A}_J$. The
latter optimization problem is easy to solve, because the optimal
$\pmb{A}_1,\ldots,\pmb{A}_J$ are clearly the matrices, such that
$\pmb{B}_i$ becomes diagonal. The main optimization problem is
therefore reduced to
\begin{align}
 \label{nd_prob_two} & \min_{\pmb{a}}
\sum_{i=1}^{J} \widetilde{p}_i \sum_{d=1}^{D}\log\left(
\sigma_{id}^2 \right), \notag \\
&\mbox{Subject to: }\pmb{a}\in \Omega,\,\,\, \sum_{n=1}^{N}a_{ni} =
\widetilde{p}_i N,
\end{align}
where, $\pmb{A}_1,\ldots,\pmb{A}_J$ are fixed and given.

In the third step of reformulating the problem, we use an iterative
upper bounding and minimizing approach to solve the optimization
problem in Eqn.~\ref{nd_prob_two}. Let $\sigma_{id}^{2}[t]$ denote
the solution obtained in the $t$-th iteration. Note that the
objective function in Eqn~\ref{nd_prob_two} can be upper bounded as
follows, due to the fact that the objective function is  concave
with respect to~$\sigma_{id}^{2}$.
\begin{align}
& \sum_{i=1}^{J} \widetilde{p}_i \sum_{d=1}^{D}
\log\left(\sigma_{id}^2\right) \notag \\
& \leq \sum_{i=1}^{J} \widetilde{p}_i \sum_{d=1}^{D}
\log\left(\sigma_{id}^2[t]\right) + \sum_{i=1}^{J}\sum_{d=1}^{D}
\frac{\widetilde{p}_i}{\sigma_{id}^2[t]}\left(\sigma_{id}^2-\sigma_{id}^2[t]\right)
\end{align}
In the $(t+1)$-th iteration, we find a solution $\pmb{a}$, such that
the corresponding $\sigma_{id}^{2}$ minimizes the above upper bound.
It can be seen clearly that the objective function never increase
during iterations. Therefore, the main optimization problem is
reduced to the following optimization problem.
\begin{align}
\label{optimization_beta_introduced} &
  \min_{\pmb{a}} \left\{  \sum_{i=1}^{J}\sum_{d=1}^{D}
\widetilde{p}_i\beta_{id} \sigma_{id}^2\right\} \notag
\\
& \mbox{Subject to: } \pmb{a}\in \Omega,\,\,\, \sum_{n=1}^{N}a_{ni}
= \widetilde{p}_i N,
\end{align}
where $\beta_{id}=1/\sigma_{id}^{2}[t]$.

In the fourth and final step of reformulating the problem, we
decompose the optimization problem in Eqn.
\ref{optimization_beta_introduced} into sub optimization problems by
using the Dantzig-Wolfe decomposition method. Each sub optimization
problem can be locally solved at each database host. The
Dantzig-Wolfe decomposition method is introduced initially for
linear programming problems \cite{dantzig60}. The method has been
then generalized to the convex optimization cases, where the duality
gaps are zero, (see for example \cite{bertsekas99} and references
therein). For non-convex optimization problems, the decomposition
method generally can not be applied due to the non-zero duality
gaps. Even though the optimization problem in Eqn.
\ref{optimization_beta_introduced} is not convex, we show in Theorem
\ref{vanishing_duality_theorem} that the duality gap goes to zeros
as the number of data samples $N$ goes to infinity. Therefore, the
decomposition method can be applied here.

Let us assume that the data samples
$\pmb{x}_1,\ldots,\pmb{x}_n,\ldots,\pmb{x}_N$ are stored at $K$
database hosts. Let ${\mathcal N}_k$ denote the set of the indexes
of the data samples stored at the $k$-th host. We use
$\pmb{A}_i\pmb{x}_n(d)$ to denote the $d$-th element of the vector
$\pmb{A}_i\pmb{x}_n$. The optimization problem in Eqn.
\ref{optimization_beta_introduced} is equivalent to the following
optimization problem.
\begin{align}
\label{dist_opt_primal} & \min_{\pmb{a}, \pmb{\widehat{\mu}}}
\left\{ \sum_{i=1}^{J}  \sum_{k=1}^{K}\sum_{n\in {\mathcal
N}_k}\sum_{d=1}^{D}
\frac{a_{ni}}{N}\beta_{id}\left[\pmb{A}_i\pmb{x}_n(d)-\widehat{\mu}_{ikd}\right]^2\right\}
\notag
\\
& \mbox{Subject to: }  \notag \\
& \sum_{i=1}^{J} a_{ni}=1,\,\,\, 0\leq a_{ni}\leq 1, \,\,\, \sum_{n=1}^{N} a_{ni}/N= \widetilde{p}_{i}, \notag \\
& \widehat{\mu}_{ikd} = \frac{1}{\widetilde{p}_iN} \sum_{n=1}^{N}
a_{ni} \pmb{A}_i \pmb{x}_n(d),
\end{align}
where, $\pmb{\widehat{\mu}}$ is the vector obtained by stacking all
the variables $\widehat{\mu}_{ikd}$. The real number
$\widehat{\mu}_{ikd}$ can be considered as a local guess or
estimation of the mean of $\pmb{A}_i\pmb{x}_n(d)$ at the $k$-th
database host. If all the local guesses are equal, then  the above
objective function is equal to the objective function in Eqn.
\ref{optimization_beta_introduced}.

Because the duality gap is approximately zero as proven in Theorem
\ref{vanishing_duality_theorem}, the optimization problem in Eqn.
\ref{dist_opt_primal} is approximately equivalent to its Lagrangian
dual problem as follows.
\begin{align}
\label{dist_opt_dual} & \max_{\pmb{\lambda}}\min_{\pmb{a},
\pmb{\widehat{\mu}}} \left\{ \sum_{i=1}^{J} \sum_{k=1}^{K}\sum_{n\in
{\mathcal N}_k}\sum_{d=1}^{D}
\frac{a_{ni}}{N}\beta_{id}(\pmb{A}_i\pmb{x}_n(d)-\widehat{\mu}_{ikd})^2\right\} \notag \\
& + \sum_{i=1}^{J}\sum_{k=1}^{K}\sum_{d=1}^{D} \lambda_{\mu ikd}
\left( \widehat{\mu}_{ikd} - \frac{1}{\widetilde{p}_iN}
\sum_{n=1}^{N} a_{ni} \pmb{A}_i \pmb{x}_n(d) \right)
\notag \\
& + \sum_{i=1}^{J}\lambda_{pi}\left[\sum_{n=1}^{N}a_{ni}/N -
\widetilde{p}_i\right] \notag \\
 & \mbox{Subject to: }  \pmb{a}\in \Omega,
\end{align}
where, $\pmb{\lambda}$ denotes the vector obtained by stacking all
variables $\lambda_{\mu ikd}$ and $\lambda_{pi}$. The above
optimization problem is separable and can be rewritten as,
\begin{align}
\label{global_opt} & \max_{\pmb{\lambda}}\sum_{k=1}^{K}
f_k^\ast-\sum_{i=1}^{J}\lambda_{pi}\widetilde{p}_i,
\end{align}
where, each $f_k^\ast$ is the optimization result of one sub
optimization problem. Let $\pmb{a}_k$ denote the vector obtained by
stacking all variables $a_{ni}$ with $n\in {\mathcal N}_k$. Let
$\pmb{\widehat{\mu}}_k$ denote the vector obtained by stacking all
parameters $\widehat{\mu}_{ikd}$, $i=1,\ldots,J$, $d=1,\ldots,D$.
\begin{align}
\label{local_opt} & f_k^\ast = \min_{\pmb{a}_k,
\pmb{\widehat{\mu}}_{k}} \left\{ \sum_{i=1}^{J}  \sum_{n\in
{\mathcal N}_k} \sum_{d=1}^{D}
\frac{a_{ni}}{N}\beta_{id}(\pmb{A}_i\pmb{x}_n(d)-\widehat{\mu}_{ikd})^2\right\} \notag \\
& + \sum_{i=1}^{J} \sum_{d=1}^{D}\lambda_{\mu ikd}
\widehat{\mu}_{ikd} + \sum_{i=1}^{J}\lambda_{pi}\sum_{n\in {\mathcal
N}_k}\frac{a_{ni}}{N} \notag
\\
& - \sum_{i=1}^{J}\sum_{k=1}^{K}\sum_{d=1}^{D}\frac{\lambda_{\mu
ikd}}{\widetilde{p}N}
\sum_{n\in {\mathcal N}_k}a_{ni}\pmb{A}_i\pmb{x}_n(d) \notag \\
 &
\mbox{Subject to: } \sum_{i=1}^{J} a_{ni}=1,\,\,\, 0\leq a_{ni}\leq
1, \mbox{ for }n\in {\mathcal N}_k.
\end{align}
It can be clearly checked that each $f_k^\ast$ can be solved locally
at each database host using only information about local data
samples $\pmb{x}_n$, $n\in {\mathcal N}_k$ with given  parameters
$\beta_{id}$, $\pmb{\lambda}$, and $\pmb{A}_1,\ldots,\pmb{A}_J$.

Therefore, the proposed algorithm iteratively computes the
clustering result. During each iteration, each database host solves
one local small-scale optimization problem as in Eqn.
\ref{local_opt}. The center processor then solves the global
optimization problem as in Eqn. \ref{global_opt} using the local
optimization results. The global optimization problem can be solved
by using, for example, the subgradient method \cite[Section
6.3.1]{bertsekas99}.

\section{Vanishing Duality Gap}
\label{sec_correct}

In this section, we  prove that the duality gap between the primal
optimization problem in Eqn. \ref{dist_opt_primal} and the dual
optimization problem in Eqn. \ref{dist_opt_dual} goes to zero as the
problem size $N$ goes to infinity. We need the Azuma inequality in
our discussion. A proof of the inequality can be found, for example
in \cite{azuma67}\cite{janson98}.
\begin{lemma}
\label{azuma_inequality}\emph{(Azuma Inequality)} Let
$Z_1,\dots,Z_N$ be independent random variables, with $Z_k$ taking
values in a set $\Lambda_k$. Assume that a (measurable) function
$f:\Lambda_1\times \Lambda_2\times \cdots \times
\Lambda_N\rightarrow {\mathbb R}$ satisfies the following Lipschitz
condition (L).
\begin{itemize}
\item (L) If the vectors $z,z'\in\prod_{1}^{N}\Lambda_i$ differ only
in the $k$th coordinate, then $|f(z)-f(z')|<c_k$, $k=1,\ldots,N$.
\end{itemize}
Then, the random variable $X=f(Z_1,\ldots,Z_N)$ satisfies, for any
$t\geq 0$,
\begin{align}
& {\mathbb P}(X\geq {\mathbb E}X+t)\leq
\exp\left(\frac{-2t^2}{\sum_{1}^{N}c_k^2}\right),
\\
& {\mathbb P}(X\leq {\mathbb E}X-t)\leq
\exp\left(\frac{-2t^2}{\sum_{1}^{N}c_k^2}\right).
\end{align}
\end{lemma}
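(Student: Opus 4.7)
The plan is to reduce the claim to the standard Azuma--Hoeffding martingale inequality applied to the Doob martingale of $X=f(Z_1,\ldots,Z_N)$. I would introduce the filtration $\mathcal{F}_k=\sigma(Z_1,\ldots,Z_k)$ (with $\mathcal{F}_0$ trivial) and set $M_k = \mathbb{E}[X\mid \mathcal{F}_k]$. Then $M_0=\mathbb{E}X$, $M_N=X$, and $(M_k)$ is a martingale with respect to $(\mathcal{F}_k)$, so the telescoping identity $X-\mathbb{E}X = \sum_{k=1}^{N}(M_k-M_{k-1})$ converts the tail estimate on $X$ into one on a sum of martingale differences $D_k := M_k - M_{k-1}$.

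The crux is to show that, conditional on $\mathcal{F}_{k-1}$, each $D_k$ has mean zero and is supported in an interval of length at most $c_k$. Using the independence of $Z_1,\ldots,Z_N$, I would write $M_k = h_k(Z_1,\ldots,Z_k)$ where
\begin{equation*}
h_k(z_1,\ldots,z_k) = \mathbb{E}\bigl[f(z_1,\ldots,z_k,Z_{k+1},\ldots,Z_N)\bigr],
\end{equation*}
and correspondingly $M_{k-1} = \mathbb{E}_{Z_k}[h_k(Z_1,\ldots,Z_{k-1},Z_k)\mid \mathcal{F}_{k-1}]$. The coordinate-wise Lipschitz hypothesis (L) on $f$ transfers through the tail expectation, yielding $|h_k(z_1,\ldots,z_{k-1},z_k)-h_k(z_1,\ldots,z_{k-1},z_k')|<c_k$ for any two values $z_k,z_k'$. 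It follows that $D_k$ lies in the interval $[\inf_{z_k} h_k(Z_1,\ldots,Z_{k-1},z_k) - M_{k-1},\,\sup_{z_k} h_k(Z_1,\ldots,Z_{k-1},z_k) - M_{k-1}]$, whose length is at most $c_k$.

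With that in hand I would invoke Hoeffding's lemma conditionally: a zero-mean random variable taking values in an interval of length $c_k$ satisfies $\mathbb{E}[e^{\lambda D_k}\mid \mathcal{F}_{k-1}] \le \exp(\lambda^2 c_k^2/8)$. Iterating this bound using the tower property gives $\mathbb{E}[\exp(\lambda(X-\mathbb{E}X))] \le \exp\bigl(\lambda^2 \sum_{k=1}^N c_k^2/8\bigr)$, and a Chernoff-style application of Markov's inequality yields $\mathbb{P}(X\ge \mathbb{E}X+t) \le \exp\bigl(\lambda^2\sum_k c_k^2/8-\lambda t\bigr)$. Optimizing over $\lambda$ with the choice $\lambda = 4t/\sum_k c_k^2$ delivers the claimed bound $\exp(-2t^2/\sum_k c_k^2)$; the symmetric lower-tail estimate follows by applying the same argument to $-f$.

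The main obstacle is the middle step: carefully exploiting the independence of the $Z_i$ to rewrite both $M_k$ and $M_{k-1}$ as expectations of the same auxiliary function $h_k$, so that the coordinate-Lipschitz property (L) can be converted into a conditional oscillation bound on the martingale increments. Everything else --- Hoeffding's lemma, the inductive moment-generating-function estimate, and the one-line Chernoff optimization --- is standard textbook material.
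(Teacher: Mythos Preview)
Your argument is correct: this is precisely the standard Doob-martingale proof of what is usually called McDiarmid's bounded differences inequality, and every step you outline (the representation $M_k=h_k(Z_1,\ldots,Z_k)$ via independence, the transfer of condition (L) to an oscillation bound on $h_k$ in its last argument, conditional Hoeffding, and the Chernoff optimization) goes through as stated.

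There is nothing to compare against, however, because the paper does not prove this lemma at all. It simply states the inequality and cites \cite{azuma67} and \cite{janson98} for a proof; your sketch is essentially the argument one finds in those references.
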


The basic idea is to use randomization. Randomization has been used
previously  in  establishing stronger duality theories. We refer
interested readers to \cite{ermoliev85} and references therein. Let
 $p(\pmb{a}, \pmb{\widehat{\mu}})$ denote the probability distribution of
$\pmb{a}$ and $\pmb{\widehat{\mu}}$, where the range of $\pmb{a}$ is
$\Omega$, and
\begin{align}\min_n \pmb{A}_i\pmb{x}_n(d)\leq
\widehat{\mu}_{ikd}\leq \max_n \pmb{A}_i\pmb{x}_n(d).
\end{align} We
introduce the following randomized primal optimization problem.
\begin{align}
\label{dist_opt_primal_random} & \min_{p(\pmb{a},
\pmb{\widehat{\mu}})} {\mathbb E} \left\{ \sum_{i=1}^{J}
\sum_{k=1}^{K}\sum_{n\in {\mathcal N}_k}\sum_{d=1}^{D}
\frac{a_{ni}}{N}\beta_{id}(\pmb{A}_i\pmb{x}_n(d)-\widehat{\mu}_{ikd})^2\right\} \notag \\
& \mbox{Subject to: }  \notag \\
& {\mathbb E}\left[ \left( \widehat{\mu}_{ikd} -
\frac{1}{\widetilde{p}_iN} \sum_{n=1}^{N} a_{ni} \pmb{A}_i
\pmb{x}_n(d) \right)\right] = 0,
\notag \\
& {\mathbb E} \left[\left.\frac{1}{N} \sum_{n=1}^{N}a_{ni} -
\widetilde{p}_{i}\right|\pmb{\widehat{\mu}}\right] = 0, \mbox{ for
all }\pmb{\widehat{\mu}}.
\end{align}
The corresponding Lagrangian randomized dual problem is
\begin{align}
\label{dist_opt_dual_random} & \max_{\pmb{\lambda}} \min_{p(\pmb{a},
\pmb{\widehat{\mu}})} {\mathbb E} \left\{ \sum_{i=1}^{J}
\sum_{k=1}^{K}\sum_{n\in {\mathcal N}_k}\sum_{d=1}^{D}
\frac{a_{ni}}{N}\beta_{id}(\pmb{A}_i\pmb{x}_n(d)-\widehat{\mu}_{ikd})^2\right\} \notag \\
& + \int \sum_{i=1}^{J}\lambda_{pi}(\pmb{\widehat{\mu}}) {\mathbb E}
\left[\left.\frac{1}{N}
\sum_{n=1}^{N}a_{ni} - \widetilde{p}_{i}\right|\pmb{\widehat{\mu}}\right] d\pmb{\widehat{\mu}} \notag \\
& + \sum_{i=1}^{J}\sum_{k=1}^{K}\sum_{d=1}^{D} \lambda_{\mu ikd}
{\mathbb E} \left[ \widehat{\mu}_{ikd} - \frac{1}{\widetilde{p}_iN}
\sum_{n=1}^{N} a_{ni} \pmb{A}_i \pmb{x}_n(d) \right].
\end{align}

Let us denote the optimal solutions of the primal optimization
problem in Eqn. \ref{dist_opt_primal}, randomized primal
optimization problem in Eqn. \ref{dist_opt_primal_random}, dual
optimization problem in Eqn. \ref{dist_opt_dual}, and  randomized
dual optimization problem in Eqn. \ref{dist_opt_dual_random} by
$P^\ast$, $PR^\ast$, $D^\ast$, and $DR^\ast$ respectively. We have
the following lemmas.

\begin{lemma}
\label{lemma_main_one}
\begin{align}
PR^\ast \leq P^\ast
\end{align}
\end{lemma}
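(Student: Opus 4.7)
The plan is to show that any deterministic feasible point of the primal problem in Eqn.~\ref{dist_opt_primal} embeds as a Dirac-mass distribution that is feasible for the randomized primal problem in Eqn.~\ref{dist_opt_primal_random} and achieves the same objective value; consequently the minimum over the strictly larger class of distributions can only be smaller or equal.

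First, I would fix any feasible pair $(\pmb{a}^*, \pmb{\widehat{\mu}}^*)$ of the deterministic primal and consider the degenerate distribution
\begin{align}
p(\pmb{a}, \pmb{\widehat{\mu}}) = \delta(\pmb{a} - \pmb{a}^*)\,\delta(\pmb{\widehat{\mu}} - \pmb{\widehat{\mu}}^*).
\end{align}
Note that $\pmb{a}^*\in\Omega$ and that each $\widehat{\mu}^*_{ikd}$ lies in $[\min_n \pmb{A}_i\pmb{x}_n(d), \max_n \pmb{A}_i\pmb{x}_n(d)]$, since it is an average of the values $\pmb{A}_i\pmb{x}_n(d)$ weighted by nonnegative coefficients summing to one. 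Hence the Dirac mass is supported in the admissible range.

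Second, I would verify that this distribution satisfies the two randomized constraints. For the first, the integrand $\widehat{\mu}_{ikd} - \frac{1}{\widetilde{p}_iN}\sum_n a_{ni}\pmb{A}_i\pmb{x}_n(d)$ vanishes deterministically at $(\pmb{a}^*,\pmb{\widehat{\mu}}^*)$ by the equality constraint of Eqn.~\ref{dist_opt_primal}, so its expectation is zero. For the second, the marginal of $\pmb{\widehat{\mu}}$ is a point mass at $\pmb{\widehat{\mu}}^*$, and conditioned on $\pmb{\widehat{\mu}} = \pmb{\widehat{\mu}}^*$ the variable $\pmb{a}$ is deterministically $\pmb{a}^*$, so $\mathbb{E}[\frac{1}{N}\sum_n a_{ni} - \widetilde{p}_i \mid \pmb{\widehat{\mu}}^*] = \frac{1}{N}\sum_n a^*_{ni} - \widetilde{p}_i = 0$ by the other equality constraint of Eqn.~\ref{dist_opt_primal}. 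The conditional constraint at other values of $\pmb{\widehat{\mu}}$ holds vacuously (a.s.\ with respect to the marginal, which is the only place the outer integral in the randomized problem has support).

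Third, since the randomized objective evaluated at this Dirac mass equals the deterministic objective evaluated at $(\pmb{a}^*, \pmb{\widehat{\mu}}^*)$, every feasible deterministic value is attained in the randomized problem, so minimizing over the larger feasible class gives $PR^\ast \leq P^\ast$. I do not expect a real obstacle here; the argument is a standard relaxation embedding. The only subtlety worth explicitly handling is the interpretation of the conditional constraint ``for all $\pmb{\widehat{\mu}}$'' under a singular distribution, which I would address by interpreting it as holding almost surely with respect to the marginal of $\pmb{\widehat{\mu}}$ (equivalently, on the support of the distribution), a standard convention for constraints on conditional expectations.
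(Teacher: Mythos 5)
Your proposal is correct and follows essentially the same route as the paper, which simply observes that each deterministic variable can be viewed as a random variable with a singleton (Dirac) distribution, so the randomized feasible set contains the deterministic one and the minimum can only decrease. Your version is a more careful elaboration of that one-line argument, including the feasibility check and the almost-sure interpretation of the conditional constraint, but there is no substantive difference in approach.
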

\begin{proof}
The lemma follows from the fact that each deterministic variable can
be considered as a random variable with a singleton probability
distribution.
\end{proof}

\begin{lemma}
\label{lemma_main_two}
\begin{align}
 DR^\ast \leq D^\ast
\end{align}
\end{lemma}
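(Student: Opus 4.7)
The plan is to prove $DR^\ast \leq D^\ast$ by upper-bounding the inner minimization in the randomized dual via Dirac distributions, which turn out to strip away the contribution of the functional multiplier $\lambda_{pi}(\cdot)$.

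First, for any randomized dual multipliers $\pmb{\lambda} = (\lambda_{pi}(\cdot), \lambda_{\mu ikd})$, the inner infimum over probability distributions $p(\pmb{a}, \pmb{\widehat{\mu}})$ in Eqn.~\ref{dist_opt_dual_random} is at most its value at any Dirac distribution $p = \delta_{(\pmb{a}_0, \pmb{\widehat{\mu}}_0)}$, since restricting the minimization to a smaller class only raises its value. I would evaluate the randomized Lagrangian at such a Dirac and observe that the integral $\int \sum_i \lambda_{pi}(\pmb{\widehat{\mu}}) \mathbb{E}[\cdot|\pmb{\widehat{\mu}}] d\pmb{\widehat{\mu}}$, interpreted as a Lebesgue integral in $\pmb{\widehat{\mu}}$, vanishes: the conditional expectation is supported on the single point $\pmb{\widehat{\mu}}_0$, which has Lebesgue measure zero. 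The remaining terms collapse to the objective evaluated at $(\pmb{a}_0, \pmb{\widehat{\mu}}_0)$ plus the $\lambda_\mu$ terms, giving a quantity that matches the deterministic Lagrangian in Eqn.~\ref{dist_opt_dual} at $\lambda_p = 0$ and is independent of $\lambda_{pi}(\cdot)$.

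Next, I would minimize this Dirac-evaluated Lagrangian over $(\pmb{a}_0, \pmb{\widehat{\mu}}_0) \in \Omega \times [\min_n \pmb{A}_i\pmb{x}_n(d), \max_n \pmb{A}_i\pmb{x}_n(d)]$ and then take the supremum over all randomized dual multipliers. Because the upper bound is independent of $\lambda_{pi}(\cdot)$, only the supremum over $\lambda_\mu$ matters, and we obtain
\begin{align*}
DR^\ast \leq \sup_{\lambda_\mu} \min_{\pmb{a}, \pmb{\widehat{\mu}}} \left\{ \text{(objective)} + \sum_{i,k,d} \lambda_{\mu ikd} \left[\widehat{\mu}_{ikd} - \frac{1}{\widetilde{p}_iN}\sum_{n=1}^{N} a_{ni}\pmb{A}_i\pmb{x}_n(d)\right] \right\}.
\end{align*}
The right-hand side is exactly the value of the deterministic dual in Eqn.~\ref{dist_opt_dual} at the specific choice $\lambda_{pi} = 0$ for all $i$. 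Since $D^\ast$ is the supremum over all deterministic multipliers $(\lambda_p, \lambda_\mu)$, the value at this particular $\lambda_p = 0$ is a lower bound on $D^\ast$, and the lemma follows.

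The main obstacle is the measure-theoretic step of showing that the integral term in the randomized Lagrangian vanishes at Dirac distributions. This is the crucial mechanism that prevents the extra freedom in the functional multiplier $\lambda_{pi}(\cdot)$ from yielding a larger dual value than its deterministic counterpart, and it hinges on the Lebesgue interpretation of $d\pmb{\widehat{\mu}}$ so that singleton supports contribute nothing to the integral. Once this is in place, the three-step bound is essentially mechanical.
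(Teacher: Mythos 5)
Your proposal is correct and rests on the same core idea as the paper's proof, which simply observes that every deterministic point can be viewed as a singleton (Dirac) distribution, so the randomized inner minimization is over a strictly larger class. You go further than the paper by explicitly handling the functional multiplier $\lambda_{pi}(\cdot)$ --- showing its integral term drops out at Dirac distributions so the bound reduces to the deterministic dual at $\lambda_{p}=0$ --- a detail the paper's one-line proof glosses over but which your Lebesgue-measure reading resolves cleanly.
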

\begin{proof}
Similar as the proof of Lemma \ref{lemma_main_one}.
\end{proof}

\begin{lemma}
\label{lemma_main_three}
\begin{align}
PR^\ast = DR^\ast
\end{align}
\end{lemma}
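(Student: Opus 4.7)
The plan is to show that randomization convexifies the problem so that standard strong duality applies. The key observation is that both the objective functional and every constraint functional in the randomized primal are \emph{linear} functionals of the joint distribution $p(\pmb{a}, \pmb{\widehat{\mu}})$, since each is an expectation of a fixed function of $(\pmb{a}, \pmb{\widehat{\mu}})$. The randomized primal is therefore an infinite-dimensional linear program over the convex set of probability measures supported on the compact product of $\Omega$ with the boxes $[\min_n \pmb{A}_i\pmb{x}_n(d), \max_n \pmb{A}_i\pmb{x}_n(d)]$, even though the underlying deterministic primal is non-convex.

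First, I would write the Lagrangian $L(p, \pmb{\lambda})$ of Eqn.~\ref{dist_opt_dual_random}, where $\pmb{\lambda}$ collects the scalar multipliers $\lambda_{\mu ikd}$ together with the function-valued multiplier $\lambda_{pi}(\pmb{\widehat{\mu}})$. Since $L$ is linear in $p$ for fixed $\pmb{\lambda}$ and linear in $\pmb{\lambda}$ for fixed $p$, it is bilinear. The set of probability measures on the compact domain above is convex and weak-star compact by the Banach--Alaoglu theorem, and the multiplier space is a convex real vector space. Sion's minimax theorem therefore yields
\begin{align}
\min_{p}\max_{\pmb{\lambda}} L(p, \pmb{\lambda}) = \max_{\pmb{\lambda}}\min_{p} L(p, \pmb{\lambda}).
\end{align}
The right-hand side is $DR^\ast$ by construction. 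For the left-hand side, for any feasible $p$ the inner maximization recovers the objective, while for any $p$ that violates some expectation constraint it diverges to $+\infty$; hence the outer minimum equals $PR^\ast$. This establishes $PR^\ast = DR^\ast$.

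The main obstacle will be to verify the hypotheses of the minimax theorem with care, especially handling the function-valued multiplier $\lambda_{pi}(\pmb{\widehat{\mu}})$ arising from the conditional-expectation constraints, which lives in an infinite-dimensional space where compactness is delicate. One way to sidestep this is to restrict first to piecewise-constant $\lambda_{pi}$ on a finite partition of the $\pmb{\widehat{\mu}}$-box, apply the finite-dimensional minimax theorem, and then pass to the limit by refining the partition; another is to invoke directly the general strong-duality result for linear programming over probability measures in the spirit of the randomization arguments in \cite{ermoliev85}. A secondary point to check is Slater-type feasibility: the deterministic point supplied by any $\pmb{a}\in\Omega$ satisfying $\sum_n a_{ni}=\widetilde{p}_i N$, together with the matched means $\widehat{\mu}_{ikd}=(\widetilde{p}_iN)^{-1}\sum_n a_{ni}\pmb{A}_i\pmb{x}_n(d)$, realized as a Dirac distribution, provides a feasible $p$ and ensures $PR^\ast < +\infty$.
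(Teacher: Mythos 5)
Your route is sound in outline but genuinely different in execution from the paper's. You treat the randomized primal in Eqn.~\ref{dist_opt_primal_random} as an infinite-dimensional linear program over probability measures on a compact set and obtain $PR^\ast=DR^\ast$ directly from a minimax argument (Sion's theorem plus weak-star compactness of the measures), identifying $\min_p\sup_{\pmb{\lambda}}L$ with $PR^\ast$ in the usual way. The paper instead introduces an $\epsilon$-relaxed problem $PR_\epsilon$ in which the equality constraints become two-sided inequalities $|\cdot|\leq\epsilon$, verifies by hand that $PR_\epsilon$ is convex (checking that mixtures of feasible distributions remain feasible, including for the conditional-expectation constraints), invokes the strong duality theorem under Slater's condition to get $PR_\epsilon^\ast=DR_\epsilon^\ast$, and then lets $\epsilon\to 0$. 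Your approach dispenses with the perturbation and with the unproved limit claims $PR_\epsilon^\ast\to PR^\ast$ and $DR_\epsilon^\ast\to DR^\ast$; the paper's approach stays within a finite family of standard convex-duality facts and avoids the functional-analytic machinery (weak-star topologies, function-valued multipliers) that your argument must handle carefully.

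One point you should not gloss over: the conditional-expectation constraints ${\mathbb E}[\,\cdot\,|\,\pmb{\widehat{\mu}}]=0$ are \emph{not} literally linear functionals of the joint law $p(\pmb{a},\pmb{\widehat{\mu}})$ --- a conditional expectation is a ratio of two linear functionals of $p$. To make the ``LP over measures'' claim precise you must first multiply through by the marginal of $\pmb{\widehat{\mu}}$ and impose the equivalent genuinely linear family
\begin{align}
\int\left(\sum_{n=1}^{N}\frac{a_{ni}}{N}-\widetilde{p}_{i}\right)p(\pmb{a},\pmb{\widehat{\mu}})\,d\pmb{a}=0\quad\mbox{for (almost) all }\pmb{\widehat{\mu}};
\end{align}
only then is the Lagrangian bilinear and weak-star continuous in $p$, as Sion's theorem requires. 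You do flag the delicacy of the function-valued multipliers, and your partition-refinement workaround (or an appeal to general duality for linear programs over measures in the spirit of \cite{ermoliev85}) is a reasonable way to close it; with that repair the argument goes through.
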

\begin{proof}
We may define the following $PR_\epsilon$ optimization problem.
\begin{align}
& \min_{p(\pmb{a}, \pmb{\widehat{\mu}})} {\mathbb E} \left\{
\sum_{i=1}^{J}  \sum_{k=1}^{K}\sum_{n\in {\mathcal
N}_k}\sum_{d=1}^{D}
\frac{a_{ni}}{N}\beta_{id}(\pmb{A}_i\pmb{x}_n(d)-\widehat{\mu}_{ikd})^2\right\} \label{pr_epsilon_obj_fun} \\
& \mbox{Subject to: }   \notag \\
& \left|{\mathbb E} \left[\left.\frac{1}{N} \sum_{n=1}^{N}a_{ni} -
\widehat{p}_{i}\right|\pmb{\widehat{\mu}}\right]\right|  \leq \epsilon,  \mbox{ for all }\pmb{\widehat{\mu}}, \label{pr_epsilon_con_one}  \\
&  \left| \widehat{\mu}_{ikd} - \frac{1}{\widetilde{p}_iN}
\sum_{n=1}^{N} a_{ni} \pmb{A}_i \pmb{x}_n(d) \right|\leq \epsilon.
\label{pr_epsilon_con_two}
\end{align}
It can be check that $PR_\epsilon^\ast \leq PR^\ast$, and
$PR_\epsilon^\ast \rightarrow PR^\ast$, as $\epsilon\rightarrow 0$.
The dual of the $PR_\epsilon$ problem $DR_\epsilon$ is
\begin{align}
 & \max_{\pmb{\lambda}^-, \pmb{\lambda}^+} \min_{p(\pmb{a},
\pmb{\widehat{\mu}})} {\mathbb E} \left\{ \sum_{i=1}^{J}
\sum_{k=1}^{K}\sum_{n\in {\mathcal N}_k}\sum_{d=1}^{D}
\frac{a_{ni}}{N}\beta_{id}(\pmb{A}_i\pmb{x}_n(d)-\widehat{\mu}_{ikd})^2\right\} \notag \\
& + \sum_{i=1}^{J}\sum_{k=1}^{K}\sum_{d=1}^{D}  \lambda_{\mu ikd}^-
\left( {\mathbb E}\left[\widehat{\mu}_{ikd} -
\frac{1}{\widetilde{p}_iN} \sum_{n=1}^{N} a_{ni} \pmb{A}_i
\pmb{x}_n(d)\right] - \epsilon \right)
\notag \\
& + \sum_{i=1}^{J}\sum_{k=1}^{K}\sum_{d=1}^{D}  (-1)\lambda_{\mu
ikd}^+ \left( {\mathbb E}\left[\widehat{\mu}_{ikd} -
\frac{1}{\widetilde{p}_iN} \sum_{n=1}^{N} a_{ni} \pmb{A}_i
\pmb{x}_n(d)\right] + \epsilon \right)
\notag \\
& + \int \sum_{i=1}^{J}\lambda_{pi}^-(\pmb{\widehat{\mu}})
\left\{{\mathbb E} \left[\left.\frac{1}{N}
\sum_{n=1}^{N} a_{ni} - \widetilde{p}_{i}\right|\pmb{\widehat{\mu}}\right]-\epsilon \right\} d\pmb{\widehat{\mu}}\notag \\
& + \int \sum_{i=1}^{J}(-1)\lambda_{pi}^+(\pmb{\widehat{\mu}})
\left\{{\mathbb E} \left[\left.\frac{1}{N}
\sum_{n=1}^{N}a_{ni} - \widehat{p}_{i}\right|\pmb{\widehat{\mu}}\right]+\epsilon\right\} d\pmb{\widehat{\mu}} \notag \\
& \mbox{Subject to: }  \lambda_{\mu ikd}^-\geq 0, \lambda_{\mu
ikd}^+\geq 0,\lambda_{pi}^-(\pmb{\widehat{\mu}})\geq 0,
\lambda_{pi}^+(\pmb{\widehat{\mu}})\geq 0.
\end{align}
It can be also checked that $DR_\epsilon^\ast \rightarrow DR^\ast$,
as $\epsilon\rightarrow 0$.

Now we show that $PR_\epsilon$ is a convex optimization problem. Let
$p^1(\pmb{a},\pmb{\widehat{\mu}})$,
$p^2(\pmb{a},\pmb{\widehat{\mu}})$ be two probability distributions
satisfying all the constraints in the $PR_{\epsilon}$ problem. Let
\begin{align}
p(\pmb{a},\pmb{\widehat{\mu}}) = \alpha
p^1(\pmb{a},\pmb{\widehat{\mu}}) +
(1-\alpha)p^2(\pmb{a},\pmb{\widehat{\mu}}),
\end{align}
where, $0\leq \alpha \leq 1$. Equivalently, we may introduce a
random variable $z$, ${\mathbb P}(z=1)=\alpha$,  ${\mathbb
P}(z=2)=1-\alpha$; $p(\pmb{a},\pmb{\widehat{\mu}}) =
p^1(\pmb{a},\pmb{\widehat{\mu}}) $, if $z=1$, and
$p(\pmb{a},\pmb{\widehat{\mu}}) = p^2(\pmb{a},\pmb{\widehat{\mu}})
$, if $z=2$. We can show that $p(\pmb{a},\pmb{\widehat{\mu}})$
satisfies the constraint in Eqn. \ref{pr_epsilon_con_one} as
follows.
\begin{align}
& {\mathbb
E}\left[\left.\sum_{n=1}^{N}\frac{a_{ni}}{N}-\widehat{p}_i\right|\pmb{\widehat{\mu}}\right]
= \int \left[\sum_{n=1}^{N}\frac{a_{ni}}{N}-\widehat{p}_i\right]
p(\pmb{a}|\pmb{\widehat{\mu}}) d\pmb{a} \notag \\
& = \int \left[\sum_{n=1}^{N}\frac{a_{ni}}{N}-\widehat{p}_i\right]
p(\pmb{a},z=1|\pmb{\widehat{\mu}})  d\pmb{a} \notag \\
& \hspace{0.2in} + \int
\left[\sum_{n=1}^{N}\frac{a_{ni}}{N}-\widehat{p}_i\right]
p(\pmb{a},z=2|\pmb{\widehat{\mu}})  d\pmb{a} \notag \\
& = \int \left[\sum_{n=1}^{N}\frac{a_{ni}}{N}-\widehat{p}_i\right]
p^1(\pmb{a}|\pmb{\widehat{\mu}})  p(z=1|\pmb{\widehat{\mu}}) d\pmb{a} \notag \\
& \hspace{0.2in} + \int
\left[\sum_{n=1}^{N}\frac{a_{ni}}{N}-\widehat{p}_i\right]
p^2(\pmb{a}|\pmb{\widehat{\mu}})  p(z=2|\pmb{\widehat{\mu}}) d\pmb{a} \notag \\
& \leq   p(z=1|\pmb{\widehat{\mu}}) \epsilon +
p(z=2|\pmb{\widehat{\mu}}) \epsilon \,\,\,\leq \epsilon
\end{align}
Similarly,
\begin{align}
& {\mathbb
E}\left[\left.\sum_{n=1}^{N}\frac{a_{ni}}{N}-\widehat{p}_i\right|\pmb{\widehat{\mu}}\right]
\geq \epsilon
\end{align}
We can also show that $p(\pmb{a},\pmb{\widehat{\mu}})$ satisfies the
 constraint in Eqn. \ref{pr_epsilon_con_two} by using the fact that the expectation is a linear functional.
Finally,  the objective function in Eqn. \ref{pr_epsilon_obj_fun} is
also convex, because the expectation is a linear functional.
Therefore, the optimization problem $PR_\epsilon$ is a convex
optimization problem.

Because, $PR_\epsilon$ is a convex optimization problem and the
Slater condition holds, $PR_\epsilon^\ast = DR_\epsilon^\ast$
according to the strong duality theorem \cite[Thm. 6.7]{jahn07}.
Therefore, $PR^\ast = DR^\ast$.
\end{proof}

\begin{lemma}
\label{lemma_main_four} Assume $\max_{n,m} ||\pmb{x}_n -
\pmb{x}_m||_2\leq V$, for a fixed upper bound $V$, where
$||\cdot||_2$ denotes the Euclidean norm. Then $ PR^\ast \rightarrow
P^\ast $, as  $N$ goes to infinity.
\end{lemma}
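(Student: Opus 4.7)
By Lemma~\ref{lemma_main_one} it suffices to show $P^\ast \leq PR^\ast + o(1)$ as $N\to\infty$. The plan is a randomized-rounding argument: start with an optimal distribution $p^\ast$ for $PR^\ast$, draw a deterministic sample from it, control the feasibility violations by Lemma~\ref{azuma_inequality}, and then patch the sample into an exactly feasible deterministic solution of Eqn.~\ref{dist_opt_primal} at negligible additional cost.

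The first step is to reduce $p^\ast$ to a product form. Since the objective and each constraint in Eqn.~\ref{dist_opt_primal_random} are linear in $\pmb{a}$ with $\pmb{\widehat{\mu}}$ held fixed, I can replace the conditional law $p^\ast(\pmb{a}\mid\pmb{\widehat{\mu}})$ by the product of the marginals of the per-sample vectors $(a_{n1},\ldots,a_{nJ})$, $n=1,\ldots,N$, without changing either the expected objective or the feasibility. The per-sample vectors are then conditionally independent across $n$ given $\pmb{\widehat{\mu}}$, which is exactly what is needed to invoke Lemma~\ref{azuma_inequality}. Conditioning on a typical $\pmb{\widehat{\mu}}$ and drawing $\pmb{a}^\ast$ from this product measure, the sums $\frac{1}{N}\sum_n a^\ast_{ni}$ and $\frac{1}{\widetilde{p}_iN}\sum_n a^\ast_{ni}\pmb{A}_i\pmb{x}_n(d)$ are Lipschitz in each coordinate with constants $O(1/N)$ and $O(V/N)$ respectively, since $a^\ast_{ni}\in[0,1]$ and the bound $\|\pmb{A}_i\pmb{x}_n\|_2\leq V$ follows from the hypothesis together with unitarity of $\pmb{A}_i$. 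With $\epsilon_N:=N^{-1/4}$, Azuma then yields, with probability tending to $1$,
\begin{align*}
\left|\frac{1}{N}\sum_n a^\ast_{ni}-\widetilde{p}_i\right| &\leq \epsilon_N, \\
\left|\frac{1}{\widetilde{p}_iN}\sum_n a^\ast_{ni}\pmb{A}_i\pmb{x}_n(d)-\widehat{\mu}_{ikd}\right| &\leq V\epsilon_N,
\end{align*}
uniformly in $i,k,d$, together with the sampled objective lying within $o(1)$ of $PR^\ast$.

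The second step is to patch $\pmb{a}^\ast$ into a deterministically feasible $\bar{\pmb{a}}$. Each marginal mass constraint is off by at most $\epsilon_N$; because each $\bar{a}_{ni}\in[0,1]$ and $\sum_i\bar{a}_{ni}=1$, the excess mass can be redistributed by modifying $O(JN\epsilon_N)$ coordinates, which changes the objective by only $O(\epsilon_N)$ since the per-sample cost is bounded by a constant depending on $V$ and the $\beta_{id}$. Setting $\bar{\mu}_{ikd}:=\frac{1}{\widetilde{p}_iN}\sum_n\bar{a}_{ni}\pmb{A}_i\pmb{x}_n(d)$ restores the deterministic equality constraint exactly; and because for each $(i,k,d)$ the squared-error term is minimized in $\mu_{ikd}$ by precisely this weighted mean (up to an $O(\epsilon_N)$ correction from the mass constraint), the deterministic objective at $(\bar{\pmb{a}},\bar{\pmb{\mu}})$ exceeds the sampled randomized objective by at most $O(\epsilon_N)$. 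Combining gives $P^\ast\leq PR^\ast + O(N^{-1/4})\to PR^\ast$.

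The step I expect to be the main obstacle is the patching: one must redistribute mass to enforce $\sum_n\bar{a}_{ni}/N=\widetilde{p}_i$ exactly while preserving both $\bar{a}_{ni}\in[0,1]$ and $\sum_i\bar{a}_{ni}=1$, and quantify the resulting perturbation in the quadratic objective uniformly in $N$. The boundedness hypothesis $\|\pmb{x}_n-\pmb{x}_m\|_2\leq V$ is precisely what keeps both the Azuma constants and the perturbation bounds scale-free in $N$.
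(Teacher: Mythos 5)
Your proposal follows essentially the same route as the paper's proof: reduce $p^\ast$ to product form via marginals, apply Lemma~\ref{azuma_inequality} to get concentration of the empirical constraints, extract one nearly-feasible deterministic sample, patch the mass constraints at cost $O(\epsilon)$ controlled by $V$ and $\max\{\beta_{id}\}$, and set $\widehat{\mu}_{ikd}$ to the exact weighted mean (which, being the minimizer of the quadratic, cannot increase the objective). The one loose point is your claim that the sampled objective lies within $o(1)$ of $PR^\ast$ with high probability --- the objective need not concentrate at $PR^\ast$ itself; the paper instead bounds the conditional expectation of the (non-negative) objective over the typical set by $PR^\ast/\mathbb{P}(\mathcal{T}(\epsilon))$ and picks a point achieving at most that average, which is the correct repair.
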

\begin{proof}
Let $p^\ast(\pmb{a},\pmb{\widehat{\mu}})$ denote the optimal
solution of the randomized primal problem. We can construct a
probability distribution $\widehat{p}(\pmb{a},\pmb{\widehat{\mu}})$
as follows.
\begin{align}
\label{hat_probability_distribution}
\widehat{p}(\pmb{a},\pmb{\widehat{\mu}}) =
p^\ast(\pmb{\widehat{\mu}})
\prod_{n=1}^{N}p^\ast(a_{n1},\ldots,a_{nJ}|\pmb{\widehat{\mu}}),
\end{align}
where, the probability distributions at the right hand  are marginal
distributions. It can be checked that the probability
$\widehat{p}(\pmb{a},\pmb{\widehat{\mu}})$ achieves the exactly same
objective function and constraint function values in the randomized
primal problem as the probability distribution
$p^\ast(\pmb{a},\pmb{\widehat{\mu}})$. Therefore, we can assume that
$p^\ast(\pmb{a},\pmb{\widehat{\mu}})$ takes the form in Eqn.
\ref{hat_probability_distribution} without the loss of generality.

We define the typical set ${\mathcal T}(\epsilon)$ as
\begin{align}
{\mathcal T}(\epsilon) =
\left\{(\pmb{a},\pmb{\widehat{\mu}})\left|\left|\sum_{n=1}^{N}\frac{a_{ni}}{N}
- \widetilde{p}_{i} \right|\leq \epsilon, \mbox{ for all
}i\right.\right\}.
\end{align}
The probability that $(\pmb{a}, \pmb{\widehat{\mu}})$ is not in the
typical set ${\mathcal T}(\epsilon)$ can be upper bounded by using
the Azuma inequality and the union bound as follows.
\begin{align}
& {\mathbb P}\left[(\pmb{a},\pmb{\widehat{\mu}})\notin {\mathcal
T}(\epsilon)\right] \leq \sum_{i=1}^{J}{\mathbb
P}\left[\left|\sum_{n=1}^{N}\frac{a_{ni}}{N} - \widetilde{p}_{i}
\right|\geq \epsilon\right] \notag \\
& \leq \sum_{i=1}^{J}\int {\mathbb
P}\left[\left.\left|\sum_{n=1}^{N}\frac{a_{ni}}{N} -
\widetilde{p}_{i}
\right|\geq \epsilon\right|\pmb{\widehat{\mu}}\right] p(\pmb{\widehat{\mu}})d\pmb{\widehat{\mu}}\notag \\
& \leq  \sum_{i=1}^{J} \int 2\exp\left(-2\epsilon^2N\right) p(\pmb{\widehat{\mu}})d\pmb{\widehat{\mu}} \notag \\
& \leq  2J\exp\left(-2\epsilon^2N\right)
\end{align}

Due to the fact that the objective function is non-negative, the
average achieved objective function values by
$(\pmb{a},\pmb{\widehat{\mu}})$ in the typical set,
\begin{align}
\label{azuma_condition_one} & {\mathbb E}\left\{\left.
\sum_{i=1}^{J}  \sum_{k=1}^{K}\sum_{n\in {\mathcal
N}_k}\sum_{d=1}^{D}
\frac{\widehat{a}_{ni}}{N}\beta_{id}(\pmb{A}_i\pmb{x}_n(d)-\widehat{\mu}_{ikd})^2\right|{\mathcal T}(\epsilon)\right\} \notag \\
& \leq \frac{PR^\ast} {{\mathbb P}((\pmb{a},\pmb{\widehat{\mu}})\in
{\mathcal T}(\epsilon))}
\end{align}
Also by the above discussions,
\begin{align}
& {\mathbb P}\left[(\pmb{a},\pmb{\widehat{\mu}})\in {\mathcal
T}(\epsilon)\right] \geq  1-2J\exp\left(-2\epsilon^2N\right)
\end{align}
Therefore, we have that the average of the objective function in the
typical set is bounded by
\begin{align}
 & {\mathbb E}\left\{\left. \sum_{i=1}^{J}
\sum_{k=1}^{K}\sum_{n\in {\mathcal N}_k}\sum_{d=1}^{D}
\frac{\widehat{a}_{ni}}{N}\beta_{id}(\pmb{A}_i\pmb{x}_n(d)-\widehat{\mu}_{ikd})^2\right|{\mathcal T}(\epsilon)\right\} \notag \\
& \leq \frac{PR^\ast} {1-2J\exp(-2\epsilon^2N)}
\end{align}

There must exist one $(\pmb{\widehat{a}},\pmb{\bar{\mu}})$ in the
typical set, such that the corresponding objective function is less
than or equal to the above average. We can further modify the above
$\pmb{\widehat{a}}$ into a certain $\pmb{\widetilde{a}}\in \Omega$,
$\pmb{\widetilde{a}}= (\ldots,\widetilde{a}_{ni},\ldots)$, such that
\begin{align} \sum_{n=1}^{N}\widetilde{a}_{ni}/N =
\widetilde{p}_{i},\end{align} and the corresponding objective
function is raised by at most $(J-1)\max\{\beta_{id}\}V^2 \epsilon$.
We can now set
\begin{align}\widetilde{\mu}_{ikd} =
\frac{1}{\widetilde{p}_iN}\sum_{n=1}^{N}\widetilde{a}_{ni}\pmb{A}_i
\pmb{x}_n(d).
\end{align}
Clearly, $\widetilde{a}_{ni}$ and $\widetilde{\mu}_{ikd}$ satisfy
all the constraints in the primal problem. Therefore,
\begin{align}
 P^\ast & {\leq}  \sum_{i=1}^{J} \sum_{k=1}^{K}\sum_{n\in
{\mathcal N}_k}\sum_{d=1}^{D}
\frac{\widetilde{a}_{ni}}{N}\beta_{id}\left[(\pmb{A}_i\pmb{x}_n(d)-\widetilde{\mu}_{ikd})^2\right]\notag
\\
& \stackrel{(a)}{\leq} \sum_{i=1}^{J} \sum_{k=1}^{K}\sum_{n\in
{\mathcal N}_k} \sum_{d=1}^{D}
\frac{\widetilde{a}_{ni}}{N}\beta_{id}(\pmb{A}_i\pmb{x}_n(d)-\bar{\mu}_{ikd})^2 \notag \\
& \leq \frac{PR^\ast}{1-2J\exp(-2\epsilon^2N) }  +
(J-1)\max\{\beta_i\}V^2  \epsilon
\end{align}
where, (a) follows from the fact that $\widetilde{\mu}_{ikd}$ are
the minimizer of the above quadratic function. The lemma then
follows from the fact that $PR^\ast \leq P^\ast$.
\end{proof}

\begin{theorem}
\label{vanishing_duality_theorem} The duality gap $P^\ast - D^\ast$
between the primal problem and dual problem goes to zero as the data
sample number $N$ goes to infinity.
\end{theorem}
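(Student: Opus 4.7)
The plan is to prove the theorem as an immediate corollary of the four preceding lemmas combined with standard weak Lagrangian duality, so no new technical machinery is needed beyond a careful chain of inequalities. I would organize the argument as a three-line sandwich.

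First I would record the weak duality bound. Since the Lagrangian dual in Eqn.~\ref{dist_opt_dual} is the standard $\max_{\pmb{\lambda}}\min_{\pmb{a},\pmb{\widehat{\mu}}}$ of a minimization primal, weak duality gives $D^\ast \leq P^\ast$ for every fixed $N$. Next I would chain Lemma~\ref{lemma_main_two} with Lemma~\ref{lemma_main_three} to obtain
\[
PR^\ast \;=\; DR^\ast \;\leq\; D^\ast.
\]
Combining with weak duality yields the two-sided bound $PR^\ast \leq D^\ast \leq P^\ast$, so that
\[
0 \;\leq\; P^\ast - D^\ast \;\leq\; P^\ast - PR^\ast.
\]

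Finally I would invoke Lemma~\ref{lemma_main_four}, which, under the bounded-diameter hypothesis $\max_{n,m}\|\pmb{x}_n-\pmb{x}_m\|_2 \leq V$, asserts that $PR^\ast \to P^\ast$ as $N\to\infty$. Passing to the limit in the sandwich squeezes $P^\ast - D^\ast$ to zero, which is exactly the conclusion of the theorem.

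I do not anticipate any obstacle at the level of the theorem itself; the genuine difficulty has already been absorbed into Lemma~\ref{lemma_main_four}, where the Azuma concentration of the marginal fractions $\sum_n a_{ni}/N$ around $\widetilde{p}_i$, together with a small perturbation of $\pmb{\widehat{a}}$ into the feasible set $\Omega$, is the real driver of the vanishing-gap phenomenon. The only piece of bookkeeping I would double-check in presenting the theorem is that the boundedness assumption from Lemma~\ref{lemma_main_four} is explicitly inherited here (or at least remarked upon when the theorem is applied to justify the Dantzig--Wolfe decomposition in Section~\ref{section_classification_algorithm}), since without it the Azuma bound used in Lemma~\ref{lemma_main_four} cannot be invoked uniformly in $N$.
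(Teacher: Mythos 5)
Your proposal is correct and is essentially the paper's own argument: the paper leaves the theorem's proof implicit in Lemmas~\ref{lemma_main_one}--\ref{lemma_main_four}, and your sandwich $PR^\ast = DR^\ast \leq D^\ast \leq P^\ast$ combined with $PR^\ast \to P^\ast$ is exactly the intended assembly. Your remark that the bounded-diameter hypothesis of Lemma~\ref{lemma_main_four} should be stated (or inherited) in the theorem itself is a fair and accurate observation about the paper's presentation.
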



%

\section{Numerical Results }

\label{sec_numerical}

In this section, we present numerical results for the proposed
clustering algorithm. In Fig. \ref{result_normal_covar}, we depict
the result of the proposed algorithm for the case of two overlapping
clusters in a two dimensional space. Both the two clusters have zero
mean. Their covariance matrices are as follows.
 \begin{align}
 \left[ \begin{array}{cc} 80000  &  52000 \\
52000 & 35600 \end{array}\right],\,\,\, \left[\begin{array}{cc}
     192800  &  -118800 \\
    -118800  &    74000 \end{array}\right].
\end{align}
The total data sample number is $2048$ and each cluster contains
$1024$ data samples. We assume that the data samples can be observed
by two database hosts, where the first database host can only
observe the $1024$ data samples from the first cluster, and the
second database host can only observe the $1024$ data samples from
the second cluster. After the clustering result is obtained, we
randomly select $128$ data samples from the first cluster and $128$
data samples from the second cluster and plot these data samples in
the figure. The data samples classified into one cluster are plotted
as red circles and the data sample classified into the other cluster
are plotted as blue squares. The percentage of missed classified
data samples is $5.32\%$. The clustering errors mainly occur at the
regions where the two clusters overlap. The algorithm starts with
two randomly selected unitary matrices $\pmb{A}_1$, and $\pmb{A}_2$.
We observe that these matrices converge quickly. We also experiment
with the cases that each database host observes a mixture of data
samples from the two clusters with various percentages. The obtained
results are not significantly different from the result in Fig.
\ref{result_normal_covar}.

\begin{figure}[h]
 \centering
 \includegraphics[width=3in]{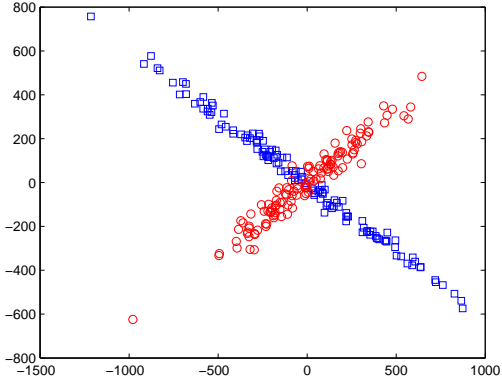}
 \caption{Clustering results for two overlapping clusters.}
 \label{result_normal_covar}
\end{figure}

In Fig. \ref{result_singular_covar}, we depict the result of the
proposed algorithm for the case of two overlapping clusters with one
cluster having a singular covariance matrix. Both the two clusters
have zero mean. Their covariance matrices are as follows.
 \begin{align}
 \left[ \begin{array}{cc} 80000  &  52000 \\
52000 & 35600 \end{array}\right],\,\,\, \left[\begin{array}{cc}
     192800  &  0 \\
    0  &    0 \end{array}\right].
\end{align}
The total data sample number is $2048$ and each cluster contains
$1024$ data samples. There are two database hosts, and the first
database host can only observe the $1024$ data samples from the
first cluster, and the second database host can only observe the
$1024$ data samples from the second cluster. In the formulated
optimization problem, a term $\sigma_n^2\pmb{I}_2$,
$\sigma_n^2=0.5$, is added to the objective function. The clustering
results of randomly selected $256$ data samples are shown in the
figure. The percentage of missed classified data samples is
$1.71\%$. The results for the cases that each database host observes
a mixture of data samples from the two clusters with various
percentages are not significantly different from the result in the
figure. The proposed clustering algorithm does not have any
numerical or convergence difficulties for these cases.

\begin{figure}[h]
 \centering
 \includegraphics[width=3in]{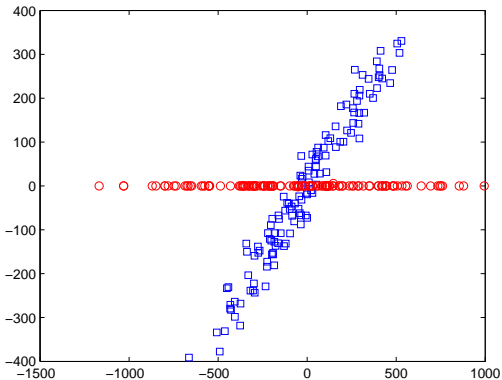}
 \caption{Clustering result for the case that one cluster has a singular covariance matrix.}
 \label{result_singular_covar}
\end{figure}

In Fig. \ref{result_separate}, we depict the result of the proposed
algorithm for the case of two clusters with different means. The
first cluster has zero mean and covariance matrix
 \begin{align}
 \left[ \begin{array}{cc} 80000  &  52000 \\
52000 & 35600 \end{array}\right].
\end{align}
The second cluster has mean $[800,800]^t$ and covariance matrix
 \begin{align}
\left[\begin{array}{cc}
     192800  &  -118800 \\
    -118800  &    74000 \end{array}\right].
\end{align}
The total data sample number is $2048$ and each cluster contains
$1024$ data samples. There are two database hosts, the first
database host can only observe the $1024$ data samples from the
first cluster, and the second database host can only observe the
$1024$ data samples from the second cluster. The percentage of
missed classified data samples is $2.29\%$. The results for the
cases that each database host observes a mixture of data samples
from the two clusters with various percentages are not significantly
different from the result in the figure.

\begin{figure}[h]
 \centering
 \includegraphics[width=3in]{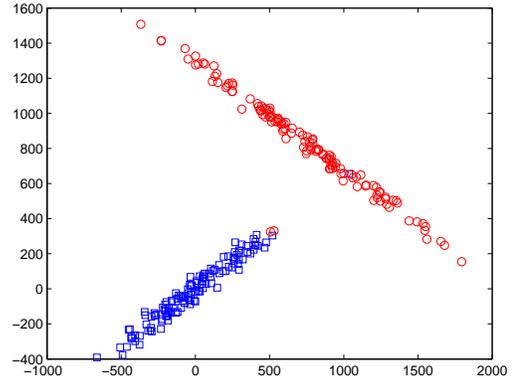}
 \caption{Clustering result for the case that the two clusters have different means.}
 \label{result_separate}
\end{figure}

In summary, we find that the proposed clustering algorithm has low
missed classification probability and fast convergence speeds. The
algorithm does not have numerical or convergence difficulties for
the case of singular covariance matrices. The proposed algorithm is
a promising approach for future large-scale data analysis.

\section{Conclusion}

\label{sec_conclusion}

This paper proposes a large-scale data clustering algorithm based on
distributed optimization. We show that the duality gap of the
considered optimization problem goes to zero as the problem size
goes to infinity. Therefore, the global optimization problem can be
decomposed into small-scale sub optimization problems by using the
Dantzig-Wolfe method. The small-scale sub optimization problems can
be solved using a group of computers coordinated by one center
processor. Numerical results show that the proposed algorithm is
effective, efficient and does not have numerical or convergence
difficulties.

%

\bibliographystyle{IEEEtran}
\bibliography{distributive_signal_classification}

\comment{
\author{\authorblockN{Michael Shell}
\authorblockA{School of Electrical and\\Computer Engineering\\
Georgia Institute of Technology\\
Atlanta, Georgia 30332--0250\\
Email: mshell@ece.gatech.edu}
\and
\authorblockN{Homer Simpson}
\authorblockA{Twentieth Century Fox\\
Springfield, USA\\
Email: homer@thesimpsons.com}
\and
\authorblockN{James Kirk\\ and Montgomery Scott}
\authorblockA{Starfleet Academy\\
San Francisco, California 96678-2391\\
Telephone: (800) 555--1212\\
Fax: (888) 555--1212}}}


%



\comment{
\begin{abstract}
The abstract goes here.
\end{abstract}
}


%
\IEEEpeerreviewmaketitle

\comment{
\section{Introduction}
This demo file is intended to serve as a ``starter file"
for IEEE conference papers produced under \LaTeX\ using IEEEtran.cls version
1.6b and later.

 May all your publication endeavors be successful.

\hfill mds

\hfill November 18, 2002

\subsection{Subsection Heading Here}
Subsection text here.

\subsubsection{Subsubsection Heading Here}
Subsubsection text here.

\section{Conclusion}
The conclusion goes here.


\section*{Acknowledgment}
The authors would like to thank...



%

}

\end{document}